\begin{document}
\title{Proving Test Set Contamination in\\ Black Box Language Models}
\author{
Yonatan Oren$^{1*}$,~~Nicole Meister$^{1*}$,~~Niladri Chatterji$^{1*}$,~~Faisal Ladhak$^2$,~~Tatsunori B. Hashimoto$^1$\\ 
$^1$Stanford University, $^2$Columbia University\\ 
\texttt{yonatano@cs.stanford.edu}\\
\texttt{\{nmeist, niladric, thashim\}@stanford.edu}\\
\texttt{faisal@cs.columbia.edu}
}

\maketitle

\def\customfootnotetext#1#2{{%
  \let\thefootnote\relax
  \footnotetext[#1]{#2}}}

\customfootnotetext{1}{\textsuperscript{*}Equal technical contribution, first author was the project lead.}

\begin{abstract}
   Large language models are trained on vast amounts of internet data, prompting concerns and speculation that they have memorized public benchmarks. 
Going from speculation to proof of contamination is challenging, as the pretraining data used by proprietary models are often not publicly accessible.
We show that it is possible to provide provable guarantees of test set contamination in language models without access to pretraining data or model weights. Our approach leverages the fact that when there is no data contamination, all orderings of an exchangeable benchmark should be equally likely. In contrast, the tendency for language models to memorize example order means that a contaminated language model will find certain canonical orderings to be much more likely than others. Our test flags potential contamination whenever the likelihood of a canonically ordered benchmark dataset is significantly higher than the likelihood after shuffling the examples.
  We demonstrate that our procedure is sensitive enough to reliably prove test set contamination in challenging situations, including models as small as 1.4 billion parameters, on small test sets of only 1000 examples, and datasets that appear only a few times in the pretraining corpus.
  Using our test, we audit four popular publicly accessible language models for test set contamination and find little evidence for pervasive contamination.
\end{abstract}

\section{Introduction}
Large language models (LLMs) have driven remarkable improvements on a number of natural language processing benchmarks \citep{Wang19*a} and professional exams \citep{OpenAI23}. These gains are driven by large-scale pretraining on massive datasets collected from the internet. While this paradigm is powerful, the minimal curation involved has led to growing concerns of dataset contamination, where the pretraining dataset contains various evaluation benchmarks. This contamination leads to difficulties in understanding the true performance of language models -- such as whether they simply memorize the answers to difficult exam questions. Disentangling the effects of generalization and test set memorization is
critical to our understanding of language model performance, but this is becoming increasingly difficult as the pretraining datasets are rarely public for many of the LMs deployed today.

Although there is ongoing work by LLM providers to remove benchmarks from pre-training datasets and perform dataset contamination studies, such filtering can fail due to bugs \citep{Brown20}, be limited to a select set of benchmarks \citep{Brown20,Wei21,chowdhery2022palm}, and requires trust in these vendors. Increasing competitive pressures have also led to some recent model releases to include no contamination studies at all \citep{OpenAI23}. These factors make it critical for us to be able to audit existing language models for the presence of benchmark datasets without the cooperation of language model providers.

In parallel to contamination studies, there has been a growing literature on heuristic membership inference algorithms, that seek to reverse engineer aspects of the pretraining dataset~\citep{Carlini19,mattern2023membership} as well as provide some evidence for test set contamination~\citep{sainz2023didchatgptcheat,golchin2023time}. However, the heuristic nature of these methods limits their usefulness, as these methods cannot elevate speculation about a suspected instance of test set contamination into an irrefutable proof of contamination.

In this work, we show it is possible to go beyond heuristics and provide provable guarantees of test set contamination in black box language models. More specifically, we provide a statistical test that can identify the presence of a benchmark in the pre-training dataset of a language model with provable false positive rate guarantees and without access to the model's training data or weights. 

To achieve these guarantees, we exploit the fact that many datasets have a property known as \textit{exchangeability}, where the order of examples in the dataset can be shuffled without affecting its joint distribution. Our key insight is that if a language model shows a preference for any particular ordering of the dataset -- such as a canonical ordering that appears in publicly available repositories -- this violates exchangeability and can only occur by observing the dataset during training (\Cref{fig:fig1}). We leverage this insight to propose a set of tests that compares the language model's log probability on the `canonical' ordering (taken from public repositories) to the log probability on a dataset with shuffled examples, and flag a dataset if the two log probabilities have statistically significant differences.

\begin{figure*}[t!]
\includegraphics[width=\linewidth]{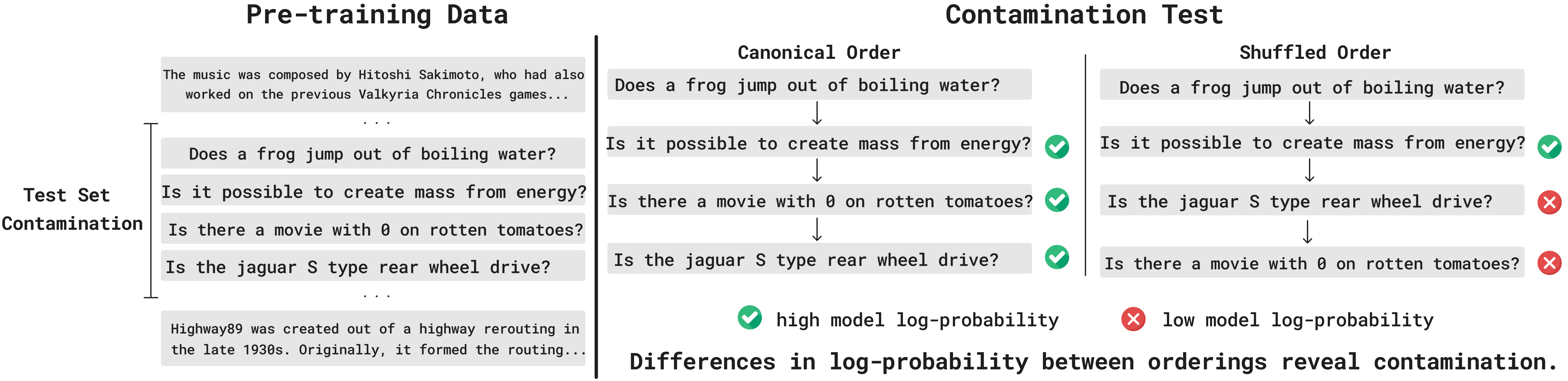}
\caption{Given a pre-training dataset contaminated with the BoolQ\citep{clark2019boolq} test set (left), we detect such contamination by testing for exchangability of the dataset (right). If a model has seen a benchmark dataset, it will have a preference for the canonical order (i.e. the order that examples are given in public repositories) over randomly shuffled examples orderings. We test for these differences in log probabilities, and aggregate them across the dataset to provide false positive rate guarantees.}
\label{fig:fig1}
\end{figure*}

Using these ideas, we propose a computationally efficient and statistically powerful test for contamination which shards the dataset into smaller segments and performs a series of log probability comparisons within each shard. We prove that this sharded test provides control over the false positive rate, enables computationally efficient parallel tests, and substantially improves the power of the test for small p-values.

We evaluate our statistical test on a 1.4 billion parameter language model trained on a combination of Wikipedia and a curated set of canary test sets. Our test is sensitive enough to identify test sets with as few as 1000 examples, and sometimes even appearing only twice in the pretraining corpus. In the case of higher duplication counts, such as datasets appearing 10 or more times, we obtain vanishingly small p-values on our test. Finally, we run our test on four commonly used, public language models to study the behavior of our test on language models in the wild and find little evidence of pervasive and strong test set contamination.

We summarize our contributions below.
\begin{itemize}
\item Demonstrating the use of exchangability as a way to provably identify test set contamination using only log probability queries.
\item Construction of an efficient and powerful sharded hypothesis test for test set contamination.
\item Empirical demonstration of black-box detection of contamination for small datasets that appear few times during pretraining.
\end{itemize}
Our three contributions suggest that black-box identification of test set contamination is practical and further improvements in the power of the tests may allow us to regularly audit language models in the wild for test set contamination. To encourage the development of new provable guarantees for test set contamination, we release our pretrained models as a benchmark for developing future statistical tests.\footnote{\url{https://github.com/tatsu-lab/test_set_contamination}}.

\newcommand{\lm}{\theta}
\newcommand{\seq}{\text{seq}}

\section{Problem Setting}

Our high-level goal is to identify whether the training process of a language model $\theta$ included dataset $X$. In our setting, the only method we have to study $\theta$ is through a log probability query $\log p_\theta(s)$ for a sequence $s$ (i.e. no access to dataset or parameters). This setting mirrors many common situations with API-based model providers \citep{Brown20*a,Bai22*a} and matches an increasing trend where the training data is kept secret for `open' models \citep{touvron2023llama,li2023textbooks}.

Provably identifying test set contamination can be viewed as a hypothesis test in which the goal is to distinguish between two hypotheses:
\begin{itemize}
\item $\bm{H_0}$: $\theta$ is independent of $X$
\item $\bm{H_1}$: $\theta$ is dependent on $X$
\end{itemize}
where we treat $\theta$ as a random variable whose randomness arises from a combination of the draw of the pretraining dataset (potentially including $X$) and we will propose a hypothesis test with the property that it falsely rejects the null hypothesis $H_0$ with probability at most $\alpha$.

\paragraph*{False positives under $H_0$} In most cases, we can make use of a property of a dataset known as \emph{exchangeability} to obtain our false positive guarantee. Nearly all datasets can be expressed as a collection of examples $X := \{x_1 \hdots x_n\}$ where the ordering of the examples are unimportant, and the probability of any ordering would be equally likely (i.e. $p(x_1 \hdots x_n) = p(x_{\pi_1} \hdots x_{\pi_n})$ for any permutation $\pi$). Notably, this assumption would hold under the standard assumption that the dataset is a collection of i.i.d examples.

Whenever exchangability of the dataset holds, the log probabilities of the model under $H_0$ must have a useful invariance property,
\begin{proposition} \label{prop:ex}
  Let $\text{seq}(X)$ be a function that takes a dataset $X$ and concatenates the examples to produce a sequence, and let $X_\pi$ be a random permutation of the examples of $X$ where $\pi$ is drawn uniformly from the permutation group.
  For an exchangeable dataset $X$ and under $H_0$,
  \[\log p_\theta(\seq(X)) \stackrel{d}{=} \log p_\theta(\seq(X_\pi)).\]
\end{proposition}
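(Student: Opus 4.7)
The plan is to reduce the distributional equality of the two log-probability random variables to a distributional equality of the underlying joint pairs $(\theta, X)$ and $(\theta, X_\pi)$, and then invoke the fact that applying the same deterministic measurable function to equal-in-distribution random variables preserves the equality in distribution.

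First I would set up the probability space explicitly: the randomness comes from three sources, namely the draw of $\theta$ (coming from the pretraining procedure), the draw of the dataset $X$, and the independent uniform draw of the permutation $\pi$ from the symmetric group $S_n$. Under $H_0$, the defining assumption is that $\theta$ and $X$ are independent; since $\pi$ is drawn freshly and independently, $\theta$ is also independent of the pair $(X, \pi)$, and in particular independent of $X_\pi$.

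Next I would combine exchangeability with this independence. Exchangeability of $X$ means $X \stackrel{d}{=} X_\pi$ for uniform $\pi$. Combined with $\theta \perp X$ and $\theta \perp (X, \pi)$, this yields the joint distributional identity
\begin{equation*}
(\theta, X) \stackrel{d}{=} (\theta, X_\pi),
\end{equation*}
because the marginal of $\theta$ is identical on both sides, the marginal of the second coordinate is identical by exchangeability, and independence means the joint factors as the product of the marginals on both sides.

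Finally I would apply the pushforward: the map $(\vartheta, Y) \mapsto \log p_\vartheta(\seq(Y))$ is a fixed deterministic measurable function, so applying it to both sides of the joint equality preserves equality in distribution, giving the claimed $\log p_\theta(\seq(X)) \stackrel{d}{=} \log p_\theta(\seq(X_\pi))$. There is no real obstacle here beyond bookkeeping; the only subtle point is making sure the argument uses $H_0$ (independence of $\theta$ from $X$) in exactly the right place, since without this independence the joint distributions of $(\theta, X)$ and $(\theta, X_\pi)$ need not agree even when $X$ is exchangeable on its own.
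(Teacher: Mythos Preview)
Your proposal is correct and follows essentially the same route as the paper's own proof: exchangeability gives the marginal equality $X \stackrel{d}{=} X_\pi$, independence of $\theta$ from $X$ under $H_0$ lifts this to the joint equality $(\theta, X) \stackrel{d}{=} (\theta, X_\pi)$, and the deterministic pushforward $(\vartheta, Y) \mapsto \log p_\vartheta(\seq(Y))$ yields the conclusion. Your write-up is somewhat more explicit than the paper's (in particular, you spell out that $\pi$ is drawn independently of $\theta$ and explain why the joint factors), but the underlying argument is the same.
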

\begin{proof}
  This follows directly from the definitions of exchangability and $H_0$. Since $X$ is exchangable, $\seq(X) \stackrel{d}{=} \seq(X_\pi)$ and by the independence of $\theta$ from $X$ under $H_0$, we know that $(\theta,\seq(X)) \stackrel{d}{=} (\theta,\seq(X_\pi))$. Thus, the pushforward under $\log p_\theta(\seq(X))$ must have the same invariance property.
\end{proof}

Proposition~\ref{prop:ex} is the basic building block of our tests. It implies that the log probabilities of $X$ under $H_0$ have the same distribution when shuffled, and this permutation invariance will enable us to directly apply standard results on constructing permutation tests \citep{lehmann2005testing}.

\paragraph*{Detection rate under $H_1$} The false positive rate guarantee holds with extremely weak assumptions, but a useful test should also have high power, meaning that it should have a high detection rate under $H_1$. We cannot hope for high detection rate without further assumptions. For instance, an adversary may hide an encrypted copy of $X$ within the parameters of the model (which would induce a clear dependence between the model and $X$) but it would be nearly impossible for us to detect such a situation even with weight access.

However, most existing forms of contamination are \emph{benign}. In the benign contamination setting we consider, pretraining datasets become contaminated when test sets accidentally slip through filtering mechanisms \cite{Brown20}. In this case, we have a reasonable expectation that the invariance in proposition~\ref{prop:ex} will be violated and $\log p_\theta(\seq(X)) \gg \log p_\theta(\seq(X_\pi))$ as the language model $\theta$ is explicitly trained to maximize the log-likelihood over its training data, including $\seq(X)$. The violation of exchangability allows us to reliably detect test set contamination, and the existing literature on memorization~\citep{Carlini21} suggests that many models may verbatim memorize the order of examples in a benchmark dataset. We now focus on building tests that can reliably identify this form of memorization.

\section{Methods}

The core idea of our statistical test is to compare the log probability of the dataset under its original ordering to the log probability under random permutations. We begin by describing the basic version of this idea, which directly implements a permutation test on the log probabilities. We then identify some drawbacks of this approach and describe a sharded test which improves the statistical power and computational efficiency of the test.

\subsection{A permutation test for contamination}

Under the null hypothesis, the likelihood under the model of any permutation of the dataset $X_\pi$ has the same distribution, and thus the rank of $\log p_\theta(\seq(X))$ among any set of randomly permuted probabilities $\{\log p_\theta(\seq(X_{\pi_1})) \hdots \log p_\theta(\seq(X_{\pi_n}))\}$ will be a uniform random variable over $[n+1]$ \citep[Theorem 15.2.2]{lehmann2005testing}.

This can be used directly to construct a permutation test. Consider the proportion of permuted copies of $X$ with lower log-likeihood than the canonical ordering under the model,
$$p := \mathbb{E}[\mathbbm{1}\{ \log p_\theta(\seq(X)) < \log p_\theta(\seq(X_\pi)) \}].$$

The distribution of $p$ will be uniform under $H_0$, and we can test for contamination at a significance level $\alpha$ by rejecting $H_0$ when $p < \alpha$. In practice, computing this expectation over all $\pi$ is intractable, and we replace this with a Monte Carlo estimate and the appropriate finite-sample correction \citep{PhipsonSmyth+2010}, which gives
\[\hat{p} := \frac{\sum_{i=1}^m \mathbbm{1}\{ \log p_\theta(\seq(X)) < \log p_\theta(\seq(X_{\pi_m})) \}+1}{m+1}.\]

This test is simple and straightforward to implement, and the validity of this test when rejecting at $\hat{p} \leq \alpha$ is clear from standard results on permutation testing \citep{lehmann2005testing,PhipsonSmyth+2010}. However, this test suffers from a major drawback in its Monte Carlo implementation -- the runtime of the test in terms of the number of log probability computations is $O(m|X|)$ for a sequence of length $|X|$ and the p-value can never be below $1/(m+1)$. For hypothesis tests that aim to reject at very low p-values (or with substantial multiple hypothesis testing corrections), this poses a tradeoff between statistical power and computational requirements. 

\subsection{A sharded likelihood comparison test for contamination}

What are some drawbacks of the naive permutation test? It has an undesirable tradeoff between statistical power and computational requirements for small $\alpha$, and also requires that the model assign higher likelihood to the canonical ordering $X$ than nearly \textit{all} shuffled orderings of $X_\pi$. This latter condition can also be a serious problem, as the model may have biases the prefer certain orderings (e.g. ones that place duplicate examples next to each other) regardless of the order seen during training.

A more likely assumption would be that the log-probability under the canonical ordering $X$ is higher than the \emph{average} log probability under a random permutation. That is, instead of relying on the quantile $\mathbb{E}[\mathbbm{1}\{ \log p_\theta(\seq(X)) < \log p_\theta(\seq(X_\pi)) \}]$, can we instead perform multiple log probability comparisons of the form $\log p_\theta(\seq(X)) < \mathbb{E}[\log p_\theta(\seq(X_\pi)) ]$?

We show that this is possible and the resulting test resembles a series of log probability comparisons followed by a t-test to aggregate these results. More specifically, we will partition the examples $X_1, \cdots, X_n$ into $r$ contiguous shards $S_1 \hdots S_r$ formed by grouping together adjacent examples
$$S_1 = (X_1, X_2, \cdots, X_{k})$$
where each shard $S_i$ contains at least $k = n / r$ examples. 

Then, we will permute the examples within each shard and compare the likelihood of the canonical ordering to a Monte Carlo estimate of the average likelihood of the shuffled ordering as  
$$s_i := \log p_\theta(\seq(X)) - \text{Mean}_{\pi}(\log p_\theta(\seq(X_\pi))).$$
Finally, to construct the test, we aggregate these shard statistics $s_i$ via the mean $s = \frac{1}{r}\sum_{i=1}^r s_i$ and test for whether $s$ is zero-mean using a t-test.

This statistical test, whose pseudocode is given in~\Cref{alg:sharded_rank_comparison}, addresses the shortcoming of the permutation test by converting a single rank comparison into a collection of log probability comparisons. The t-test based approach also requires $O(m|X|)$ runtime for $m$ permutations, but there is no $1/m$ minimum p-value, and in practice we find that the p-values obtained by this approach decay rapidly, as it only requires that the language models consistently assign higher-than-average log probabilities to the canonical ordering, rather than requiring that the canonical log probability be in the tails of the permutation null distribution.

\begin{algorithm}[h!]
    \caption{Sharded Rank Comparison Test}
    \label{alg:sharded_rank_comparison}
    \begin{algorithmic}[1]
    \REQUIRE Test set examples $x_1, \dots, x_n$
    \REQUIRE Target model $\theta$
    \REQUIRE Number of shards $r$
    \REQUIRE Number of permutations per shard $m$
    
    \STATE Partition the examples into shards $S_1, S_2, \cdots, S_r$, where each shard has at least $\lfloor n/r \rfloor$ examples, and one extra example is added to the first $n \mod r$ shards.
    
    \FOR{each shard $S_i$}
        \STATE Compute the log-likelihood of the canonical order: 
        \[
        l_{\text{canonical}}^{(i)} := \log p_{\theta}(\seq(x_1^{(i)}, x_2^{(i)}, \cdots, x_k^{(i)}))
        \]
        \STATE Estimate $l_{\text{shuffled}}^{(i)} := \text{Mean}_{\pi}[\log p_{\theta}(\seq(x_{\pi(1)}^{(i)}, \cdots, x_{\pi(k)}^{(i)}))]$ by computing the sample average over $m$ random permutations $\pi$.

        \STATE Compute $s_i = l_{\text{canonical}}^{(i)} - l_{\text{shuffled}}^{(i)}$
    \ENDFOR
    
    \STATE Define $s = \frac{1}{r}\sum_{i=1}^r s_i$ the sample average over the shards.
    
    \STATE Run a one-sided t-test for $E[s_i] > 0$, returning the associated p-value of the test as $p$.
    
\end{algorithmic}
\end{algorithm}

Under the null, we expect $s$ to be the sum of independent random variables and we can now show that the overall test provides a false positive rate guarantee.
\begin{theorem}
  Under the null hypothesis, an i.i.d dataset $X$, and finite second moments on $\log_\theta(S)$,
  $$ |P(p<\alpha) - \alpha| \to 0 $$
  as $m \to \infty$ and $p$ is defined as the p-value in \Cref{alg:sharded_rank_comparison}.
\end{theorem}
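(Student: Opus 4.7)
The plan is to decompose the theorem into two parts: (a) show that under $H_0$ the shard statistics $s_1, \dots, s_r$ form an i.i.d.\ sample with mean zero and finite variance, and (b) argue that the Monte Carlo noise introduced by finite $m$ vanishes, so that the one-sample t-test applied to them is asymptotically level-$\alpha$.

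First I would verify (a). Under $H_0$, $\theta$ is independent of $X$, and the i.i.d.\ structure of $X$ makes the disjoint shards $S_1, \dots, S_r$ i.i.d.\ collections. The Monte Carlo permutations within each shard are drawn independently across shards, so conditional on $\theta$ the shard statistics $s_i = l_{\text{canonical}}^{(i)} - l_{\text{shuffled}}^{(i)}$ are i.i.d. Applying Proposition \ref{prop:ex} to each shard (an i.i.d.\ dataset is exchangeable) yields
\[
\log p_\theta(\seq(S_i)) \stackrel{d}{=} \log p_\theta(\seq(S_i^\pi)),
\]
so $\mathbb{E}[l_{\text{canonical}}^{(i)} \mid \theta] = \mathbb{E}[l_{\text{shuffled}}^{(i)} \mid \theta]$ and hence $\mathbb{E}[s_i \mid \theta] = 0$. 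The finite second-moment assumption on $\log p_\theta$ transfers directly to $s_i$.

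For (b), by the strong law of large numbers, as $m \to \infty$ the Monte Carlo average $l_{\text{shuffled}}^{(i)}$ converges almost surely to $\mathbb{E}_\pi[\log p_\theta(\seq(S_i^\pi)) \mid S_i, \theta]$, so each $s_i \to s_i^\infty$ with $\mathbb{E}[s_i^\infty \mid \theta] = 0$ and finite variance; by continuous mapping the t-statistic computed from $(s_1, \dots, s_r)$ converges to the one computed from $(s_1^\infty, \dots, s_r^\infty)$. Applying the CLT and Slutsky's theorem conditional on $\theta$ to $\sqrt{r}\,\bar{s}/\hat{\sigma}$ gives an asymptotically $N(0,1)$ t-statistic under $H_0$, and therefore an asymptotically uniform p-value on $[0,1]$. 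Integrating over $\theta$, which is independent of $X$ and of the permutations under $H_0$, lifts this to the unconditional $|P(p<\alpha) - \alpha| \to 0$.

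The main obstacle is that a one-sample t-test yields exactly uniform p-values only under exact normality of the samples; since the limiting $s_i^\infty$ are not normal in general, the exact equality $P(p<\alpha) = \alpha$ fails for finite $r$, so the stated convergence implicitly relies on $r$ also being large enough for the t-statistic's normal approximation to hold, with the finite second moment assumption providing exactly the regularity that CLT requires. The $m \to \infty$ limit isolates the Monte Carlo contribution to the error, while the independence of $\theta$ and $X$ under $H_0$ is what allows the conditional-on-$\theta$ CLT analysis to be integrated into an unconditional statement.
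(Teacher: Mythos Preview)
Your proposal is correct and follows essentially the same route as the paper: invoke Proposition~\ref{prop:ex} to get $\mathbb{E}[s_i]=0$ under $H_0$, use independence across disjoint shards, and apply the CLT so the one-sided t-test is asymptotically level-$\alpha$. You are in fact more careful than the paper's own proof in separating the Monte Carlo error (controlled by $m \to \infty$) from the CLT approximation over shards (which needs $r$ large), and in flagging that the t-test is only asymptotically valid in $r$ when the limiting $s_i^\infty$ are non-Gaussian---a subtlety the paper glosses over by conflating $m$ and $r$ in its CLT step and citing Lehmann--Romano directly.
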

\begin{proof}
  The result follows directly from the combination of \Cref{prop:ex} and standard invariance results in \citep{lehmann2005testing}. First, by \Cref{prop:ex}, note that the distribution of $\log p_{\theta}(\seq(x_{\pi(1)}^{(i)}, \cdots, x_{\pi(k)}^{(i)})$ is invariant to the permutation $\pi$.

  By \citet[Theorem 15.2.2]{lehmann2005testing}, this guarantees that the permutation distribution is uniform over the support, and the statistic $s_i$ must be zero-mean. Next, we note that each shard is independent, as each example is split independently into a separate shard with no overlap. By independence and the finite second moment condition, $s \to N(0,\sigma^2/\sqrt{m})$ under the null by the central limit theorem and a one sided t-test provides asymptotically valid p-values with $P(p<\alpha) \to \alpha$ uniformly as $m\to \infty$ \citep[Theorem 11.4.5]{lehmann2005testing}.
  \end{proof}

  This result ensures that the sharded rank comparison test also provides (asymptotic) guarantees on false positive rates, much like the permutation test. The test we propose here has two small differences relative to the permutation test -- it provides asymptotic, rather than finite-sample valid p-values and assumes i.i.d $X$ for the proof. These conditions could be relaxed by the use of Berry-Esseen bounds to obtain finite-sample convergence rates for the CLT as well as replacing our use of a standard central limit theorem with one applicable to the sums of exchangable random variables. However, we opted to present the simpler asymptotic test given the frequent use of i.i.d data generation assumption in the literature as well as the fast convergence of the CLT in practice.

\section{Experiments}

We now demonstrate that our test is effective for detecting many common forms of test set contamination. We begin by training a 1.4 billion parameter language model, consisting of both Wikipedia and a known collection of exchangeable test sets. These canaries serve as positive controls for our test, and our goal will be to flag as many of these as possible. Having validated the test in a setting with known contamination, we then explore its use with existing open models.

\subsection{Pretraining with intentional contamination}
\label{ssec:canary}

\paragraph*{Datasets and training} To validate our test statistic, we train a 1.4 billion parameter GPT-2 model from scratch with a combination of standard pretraining data (Wikitext, taken from the RedPajama corpus \citep{together2023redpajama}) and known test sets. We derive 10 test sets from numerous standard datasets (BoolQ \citep{clark2019boolq}, HellaSwag \citep{zellers2019hellaswag}, OpenbookQA \citep{mihaylov2018suit_openbookqa}, MNLI \citep{williams2018broadcoverage_mnli}, Natural Questions \citep{kwiatkowski-etal-2019-naturalqa}, TruthfulQA \citep{lin2022truthfulqa}, PIQA \citep{bisk2019piqa}, MMLU \citep{hendryckstest2021}), and subsample the datasets to around 1000 examples to ensure that the test sets remain a small part of the overall pretraining dataset (See~\Cref{dup-canary} for exact sizes). While we do not know if these datasets are exchangable when they were constructed, we can make them exchangable simply by applying a random shuffle to the dataset, which would make all orderings of the examples equally likely.

To test our ability to detect benchmarks at various duplication rates, we duplicate each of the datasets a different number of times - ranging from 1 to 100 (See Table~\ref{dup-canary}). The overall pretraining dataset has 20.2B tokens, with ~20M tokens associated with some benchmark dataset.

\paragraph*{Test parameters} The sharded rank comparison test requires two additional parameters: the shard count $m$ and the permutation count $r$. Thoughout these experiments we use $m=50$ shards and $r=51$ permutations. In our ablations below, we found that the tests are not particularly sensitive to these parameters, and we fix these parameters to avoid the possibility of p-hacking.

\begin{table}[ht!]
\caption{We report the results of training a 1.4B language model from scratch on Wikitext with intentional contamination. For each injected dataset, we report the number of examples used (size), how often the test set was injected into the pre-training data (dup count), and the p-value from the permutation test and sharded likelihood comparison test. The bolded p-values are below $0.05$ and demonstrate in the case of higher duplication counts, such as datasets appearing 10 or more times, we obtain vanishingly small p-values on our test. Finally, rows marked 1e-38 were returned as numerically zero due to the precision of our floating point computation. }
\label{dup-canary}
\begin{center}
  \begin{tabular}{l|llll}
    \toprule
Name                         & Size & Dup Count & Permutation p & Sharded p \\ \midrule
BoolQ                        & 1000 & 1         & 0.099         &       0.156     \\
HellaSwag                    & 1000 & 1         & 0.485         &       0.478     \\
OpenbookQA                   & 500  & 1         & 0.544         &       0.462     \\
MNLI                         & 1000 & 10        & \textbf{0.009}&       \textbf{1.96e-11}     \\
TruthfulQA                   & 1000 & 10        & \textbf{0.009}&       \textbf{3.43e-13}     \\
Natural Questions            & 1000 & 10        & \textbf{0.009}&       \textbf{1e-38}     \\
PIQA                         & 1000 & 50        & \textbf{0.009}&       \textbf{1e-38}     \\
MMLU Pro. Psychology         & 611  & 50        & \textbf{0.009}&       \textbf{1e-38}     \\
MMLU Pro. Law                & 1533 & 50        & \textbf{0.009}&       \textbf{1e-38}     \\
MMLU H.S. Psychology         & 544  & 100       & \textbf{0.009}&       \textbf{1e-38}     \\                                                                   \bottomrule

\end{tabular}
\end{center}
\end{table}

\paragraph*{Canary Results} In Table~\ref{dup-canary}, we find that our test is highly sensitive, and provides near-zero p-values at duplication rates of 10 or above. These detections hold for relatively small datasets ($\leq 1000$ examples) and for a modestly sized language model with 1.4 billion parameters. Given that many test sets are much larger in practice, and many language models of interest are much larger and memorize more aggressively~\citep{Carlini19}, these findings suggest that our test is likely to detect contamination in practice.

While the permutation test attains significance (at a typical $\alpha=0.05$, say) for all benchmarks duplicated at least 10 times, the p-values are bounded below by $1 / (1 + r)$, where the number of permutations $r$ used here is $100$. Results for our sharded test use $r=50$; even with half the compute, the sharded test attains comparable performance for benchmarks with small duplication rate. However, the p-values attained by the sharded test for moderate to high duplication rates are vanishingly small. 

Attaining comparably low p-values using the permutation test is computationally infeasible. For example, to allow for the possibility of a p-value as low as 1.96e-11 (matching the MNLI result) would require permuting the dataset $10^{11}$ times, and as many forward passes of the model. 

Although our test is unable to detect contamination at a duplication rate of 1, other existing literature on memorization has suggested that detection at this duplication level is extremely difficult. Prior work has found that existing tests of memorization begin to work with 10-30 duplicates \citep{Carlini21}, that deduplicated text is hard to extract \citep{kandpal2022deduplicating}, and that dataset contamination with a duplication rate of 1 barely affects downstream benchmark performance \citep{magar2022data}. 

\paragraph*{Power as a function of duplication rate.} We carefully study the lowest duplication rate for which our test can reliably detect contamination. To do this, we perform the above canary study but with duplication rates ranging from 1 to 7, and we show the aggregate log p-values for each duplication rate in \Cref{fig:dupcount}. We find that we cannot reliably detect duplication rates of 1, but that at counts of 2 and 4 we begin to detect some test sets (gray points below the dashed line) and that the detection threshold is around a duplication rate of 4. This suggests that even small amounts of dataset duplication would be sufficient for detection, and future improvements to the power of this test could enable reliable detection at much lower duplication rates.

\begin{figure}[h]
\begin{center}
\includegraphics[width=0.5\linewidth]{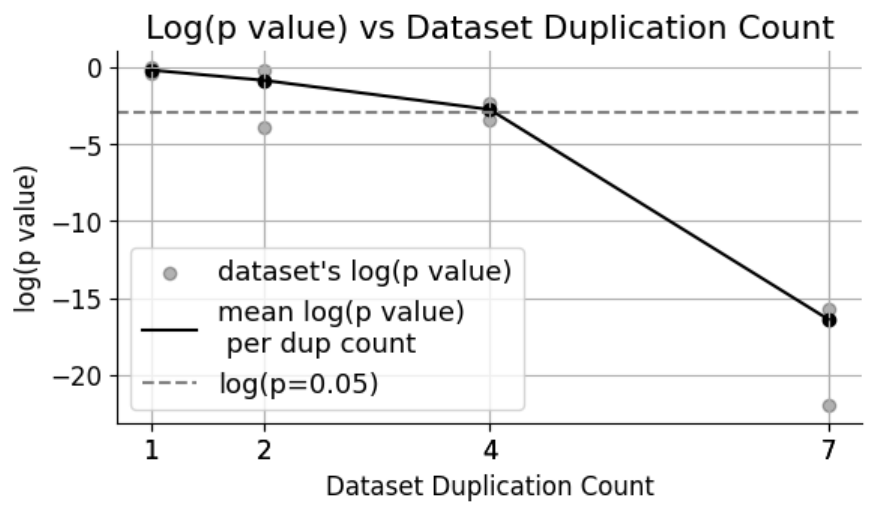}
\end{center}
\caption{For a model pre-trained with canary datasets injected at a duplication count of 1, 2, 4, and 7, we plot the log p-value against dataset duplication count to quantify how the test's power depends on dataset duplication count. 
}
\label{fig:dupcount}
\end{figure}

\paragraph*{A public benchmark of provable test set contamination}

Our work demonstrates that exploiting exchangability allows us to provably detect test set contamination even for small, low-duplication count datasets. However, it is an open question whether there are tests that can reliably detect contamination at a duplication rate of 1. To support future work on this open problem, we release our pre-trained models trained on Wikitext mixtures together with the corresponding canary test sets\footnote{}.

In addition to the release, we will maintain a leaderboard of methods that provide (asymptotically valid) p-values, ranking methods by the average log p-value. We hope that the model and benchmark spurs further development of tests for contamination, and encourage members of the research community to improve on our results for low duplication counts.

\subsection{Sharding and Permutation Count}

Our test relies on two parameters -- the number of shards in the test, and the number of permutations to sample. Both of these affect the power of the test, and we carefully study the impact of these parameters on our ability to detect test sets by evaluating our pre-trained model on the 6 datasets that contain 1000 examples (BoolQ, HellaSwag, MNLI, NaturalQuestions, TruthfulQA, PIQA). For the number of shards, we explore a range of settings, from 10 shards to 200 shards and for permutations we test a range from 1 to 50 permutations. 

\paragraph*{Shard sensitivity} Our results in~\Cref{fig:shard} show that there is a sweet spot to the number of shards, around 10-20 shards, where our detection rate for test sets are maximized. Larger numbers of shards perform worse, since each shard involves fewer examples. Shards below 10 do not perform well, as this is likely too few samples to merit the use of an asymptotically valid test like the t-test.

\begin{figure}[h]
  \centering
  \begin{subfigure}[t]{0.45\linewidth}
      \includegraphics[width=\linewidth]{figures/shard.pdf}
      \caption{So long as each shard contains enough examples and enough shards are used, the p-value is stable under variations of the number of shards $r$. We plot the average log p-value of those six of our pre-trained model benchmarks with 1,000 examples, varying the number of examples per shard.
      \label{fig:shard}}
  \end{subfigure}
  \hfill
  \begin{subfigure}[t]{0.45\linewidth}
      \includegraphics[width=\linewidth]{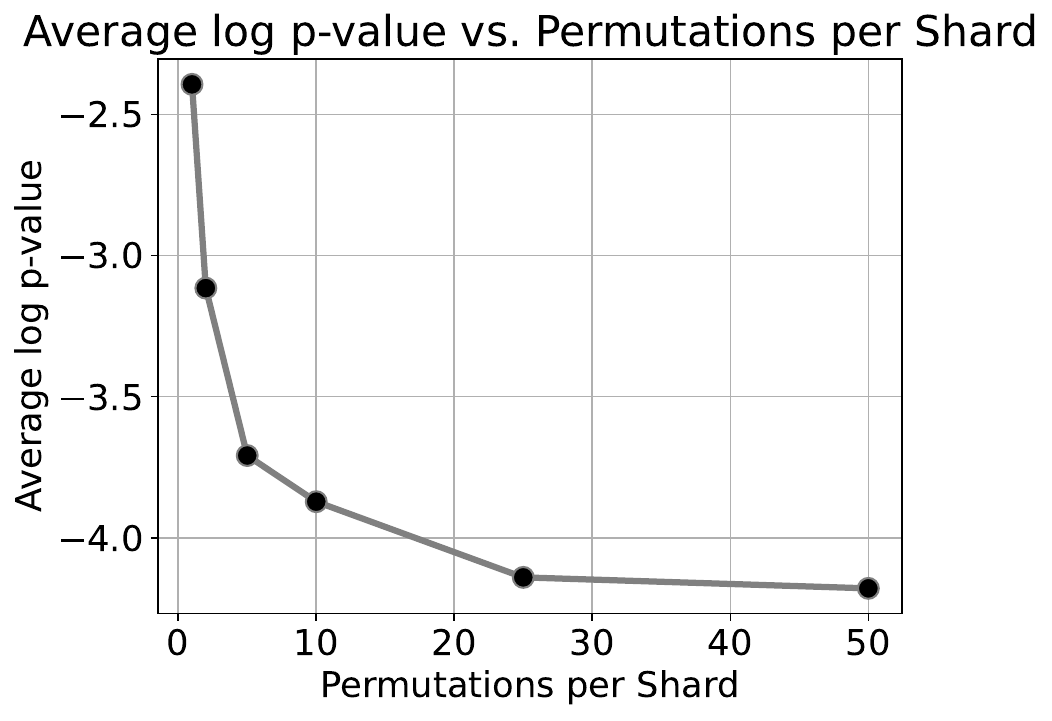}
      \caption{Increasing the permutation count improves the estimate of the mean log-likelihood of the shard under permutation, but we find that the p-value stabilizes at around 25 shuffles. We plot the average logarithm of the p-value(s) of 6 datasets evaluated on our pretrained model as a function of permutations per shard. 
      \label{fig:perm}}
  \end{subfigure}
  \caption{Impact of varying shard and permutation counts on test performance.}
\end{figure}
\paragraph*{Permutation count sensitivity} We also measure the dependence of our test on the number of permutations per shard in Figure~\ref{fig:perm}, and find more permutations to generally improve the power of our test. We test permutations of 1, 2, 10, 25, 50 and compute the average log p-value of the 6 datasets evaluated on the pretrained model. In practice we find that there is substantial diminishing returns beyond 25 permutations in the t-test. This stands in stark contrast to the permutation test, where a permutation count of 25 would only allow for a minimum p-value of 0.038.

\subsection{Evaluating existing models for dataset contamination}

We now demonstrate the utility of our procedure in validating test set contamination in multiple publicly available language models: LLaMA2 (\citet{touvron2023llama}), Mistral-7B (\citet{mistral2023}), Pythia-1.4B (\citet{biderman2023pythia}), and GPT-2 XL (\citet{radford2018language}), on eight public test benchmarks: AI2-Arc (\cite{Clark2018ThinkYH}), BoolQ (\cite{clark2019boolq}), GSM8K (\cite{cobbe2021training}), LAMBADA (\cite{paperno2016lambada}), NaturalQA (\cite{47761}), OpenBookQA (\cite{Mihaylov2018CanAS}), PIQA (\cite{bisk2019piqa}), and MMLU (\cite{hendryckstest2021}). Computationally, we find that our test runs reasonably quickly for a 7 billion parameter model, allowing for the testing of 65 files for contamination in under 24 hours using 50 shards with 250 permutations per shard, and we find that the test outcomes are in general agreement with the contamination study results of \citet{brown2020language} and \citet{touvron2023llama}: we do not find evidence of pervasive verbatim test set contamination across the models and benchmarks we tested.

\begin{table}[h!]
  \caption{P-values for contamination tests on open models and benchmarks. With the exception of ARC for Mistral, none of the tests give evidence for contamination. The MMLU results are marked with a $\dagger$ to indicate that the p-values are the result of p-value aggregating the constituent datasets of MMLU after heuristic filtering for non-exchangable datasets (see main text). The resulting LLaMA2 and Mistral p-values are small, consistent with the contamination studies in \citet{touvron2023llama} identifying mild MMLU contamination.}
  \label{llama-mmlu}
  \begin{center}
    \begin{tabular}{l|llllll}
      \toprule
      Dataset                           & Size  & LLaMA2-7B & Mistral-7B & Pythia-1.4B & GPT-2 XL & BioMedLM \\ \midrule
      Arc-Easy                           & 2376 & 0.318          & \textbf{0.001} & 0.686 & 0.929 & 0.795 \\
      BoolQ                             & 3270 & 0.421          & 0.543          & 0.861 & 0.903 & 0.946 \\
      GSM8K                             & 1319 & 0.594          & 0.507          & 0.619 & 0.770 & 0.975 \\
      LAMBADA                           & 5000 & 0.284          & 0.944          & 0.969 & 0.084 & 0.427 \\
      NaturalQA                         & 1769 & 0.912          & 0.700          & 0.948 & 0.463 & 0.595 \\
      OpenBookQA                        & 500 & 0.513          & 0.638          & 0.364 & 0.902 & 0.236 \\
      PIQA                              & 3084 & 0.877          & 0.966          & 0.956 & 0.959 & 0.619 \\
      MMLU$^\dag$                              & --   & 0.014          & 0.011         & 0.362  &  --   &  --  \\
      \bottomrule
  \end{tabular} 
  \end{center}
  \end{table}

We tested five models for contamination by eight publicly available benchmarks and list the results in Table~\ref{llama-mmlu}. We use 50 shards and 250 permutations per shard throughout the experiments. For test sets containing more than 5,000 examples, we truncate and test only the first 5,000. Benchmark datasets in the wild may contain non-exchangable elements such as sequential indexes or duplicate examples which would break the false positive guarantees of our test. To check for these possibilities we manually inspected each dataset for non-exchangable elements, and also run our tests on a `negative control' of BioMedLM \citep{bolton2022biomedlm}, a language model trained exclusively on PubMed data and known not to contain the benchmarks used here. The p-values computed for BioMedLM are not significant across the benchmarks shown here, suggesting that any significant results for the other models tested are not simply due to non-exchangeability.

Our results in Table~\ref{llama-mmlu} show non-significant results across most models and datasets. While failure to reject the null hypothesis is not direct evidence in favor of the null, our synthetic canary evaluations from section~\ref{ssec:canary} suggest that it is unlikely for these models to have seen most of these test sets more than 2 to 10 times. One notable exception is AI2-ARC on Mistral, which does return a low p-value of 0.001 and could suggest some contamination. While this p-value appears small, we caution the reader that applying a multiple hypothesis test corection would imply that 0.001 is right at the boundary of statistical significance, and due to challenges with garden-of-forking-paths type analysis, significance tests that are at the boundary of the rejection cutoff should be interpreted cautiously. We present these results as showing promising first steps towards third-party audits of test set contamination, rather than a direct proof of contamination of specific models.

We additionally discuss contamination tests on MMLU, which was identified as a potential contaminant in recent studies (\citet{touvron2023llama}), and involves important details. MMLU is not a single test set, but rather a collection of test sets. We speculate that at least 14 of the test sets are non-exchangeable, and applying our test directly would break the false positive guarantees. To understand if our tests can still provide some insights into contamination, we run our MMLU test with a non-exchangability filtering criterion.

To evaluate models for contamination by MMLU, we first exclude those 14 test files from consideration for which our test flags either BioMedLM or GPT-2 as contaminated (both are negative controls as the GPT-2 family of models predates MMLU). We run our test on each of the 43 remaining test files (with 1000 permutations, due to the small size of each file) and aggregate the p-values using Fisher's method (\citet{Fisher1934}). Although the omnibus p-values resulting from this procedure can no longer provide a proof of contamination (due to non-independence and heuristic nature of the fitering step), their magnitude serves as heuristic evidence for contamination. The resulting p-values and empirical CDFs (Figure \ref{fig:ecdf}) of the 43 test sets are indicate mild deviation from the null hypothesis, consistent with the findings of mild test set contamination in \citet{touvron2023llama}.

\section{Related Work}
Our work relates to a large literature on data memorization, privacy, and membership inference attacks for large langauge models. We discuss some of the most relevant works to ours below.

There is a substantial literature studying memorization of data in large language models, often from the privacy perspective \citep{Carlini21,Carlini19,kandpal2022deduplicating,mattern2023membership, carlini2023quantifying}. Most of these works have focused on analyses of what is memorized and whether private information can be extracted from a large langauge model, but do not build tests to specifically identify test set contamination. Our work has a narrower focus on test set contamination, but this also allows us to build tests that provide more precise guarantees of contamination.

Data contamination has been studied in many contexts, including in the study of pretraining corpora (\citep{dodge-etal-2021-documenting}) as well as in the analysis section of many language model papers \citep{Hoffmann22,Brown20,Gao20*a}. The n-gram based analyses in these papers can shed light on contamination, but they can have high false positives (e.g. SQuAD \citep{rajpurkar2016squad} containing Wikipedia) and are limited to those datasets chosen for analysis. Our approach enables third party tests of dataset contamination with only access to log probabilities, enabling broader testing, without having to trust the model provider.

For third-party tests of contamination, there have been a few recently proposed heuristics. \citet{sainz2023didchatgptcheat} propose to identify test set contamination in GPT-3 and GPT-4 by prompting the models to generate verbatim examples from a test set. The contemporaneous work of \citet{golchin2023time} similarly proposes to identify contamination in black-box models by prompting a model to generate completions of random example prefixes and using GPT-4 to judge the closeness between the completion and the ground truth. While these approaches are efficient and do not require access to pre-training data, they do not enjoy the same provable false-positive guarantees of our work and require strong memorization that is detectable in generated outputs.

Closest to our work is the \emph{exposure statistic} in \citet{Carlini19} and other subsequent variations (\citet{mattern2023membership}), which tests the perplexity differences between a target sequence and random sequences. The idea of comparing the rank of the target log probability to some baseline distribution is similar to our work. However, our work is distinct in using the exchangability of datasets to obtain an exact null distribution (giving us provable guarantees when identifying contamination) and in developing a sensitive and efficient shard-based test. 

Beyond language modeling, identifying the presence of a particular set of examples in the training data of a machine learning model is related to the security and privacy topic of membership inference (\citet{shokri2017membership,mattern2023membership}). Our work contributes to this literature by developing a new form of membership inference attack that leverages the exchangability of benchmark datasets.

\section{Limitations}
We highlight a few limitations of our approach for detecting test set contamination. First, the p-values presented in this paper do not have multiple test corrections applied, as it is difficult to define the `total number of hypotheses' tested throughout development.

Second, any application of this test in practice will likely involve taking an off-the-shelf benchmark dataset $X$, for which it will be difficult to know if the dataset is truly exchangable. Heuristic negative controls such as our BioMedLM experiments can be helpful, but we cannot ever prove that a dataset is exchangable without knowing its data generating process. We strongly encourage future dataset creators to apply a random shuffle to their datasets (and to publicize this fact), which would allow our tests to be applied.

Finally, our tests focus on the case of verbatim contamination where a language model ingests a test set directly. Contamination can happen in many other ways, such as when a language model consumes a data source used in the construction of a benchmark (e.g. Wikipedia used in SQuAD, professional tests in MMLU). Verbatim memorization of a test set is not the only form of contamination, and our test cannot rule out the possibility of more complex forms of partial contamination.

\section{Conclusion}

In this work, we demonstrated that it is possible to construct a statistical test for test set contamination that provides false positive rate guarantees and requires nothing other than the ability to compute log probabilities. We construct new, sharding based tests for contamination and demonstrate their power on both carefully constructed canaries as well as publically available language models. We view these tests as a first step towards building powerful third party tests of contamination, and we believe it is an exciting open problem to build tests that are capable of reliably detecting contamination at the single-duplication-count regime.

\section{Acknowledgements}
We gratefully acknowledge support from the CRFM Levanter team, especially David Hall, for both computational resources and infrastructure support, and to Google's TPU Research Cloud (TRC) for Cloud TPUs used in the pretraining experiments.
Nicole Meister was supported by NSF GRFP DGE-2146755. Niladri Chatterji and Faisal Ladhak were supported by SAIL and Tatsunori Hashimoto was supported by a gift from IBM and a Hoffman-Yee grant. Finally, we would like to thank Nicholas Carlini and Percy Liang for insightful discussions on memorization and test set contamination.

\bibliographystyle{iclr2024/iclr2024_conference}
\bibliography{main.bib,iclr2024/iclr2024_conference.bib}

\newpage

\appendix
\section*{Appendices}
\addcontentsline{toc}{section}{Appendices}
\renewcommand{\thesubsection}{\Alph{subsection}}

\subsection{Strided Log-Likelihoods}
To compute log-likelihoods for sequences exceeding the context length, we use a strided window approach, with a stride equal to half of the model's context length. We find that decreasing the stride beyond half the context length does not yield significant gains.

\subsection{Pretraining Details}
We elaborate on the hyperparameters and training procedure of our 1.4B language model, trained from scratch on intentionally contaminated Wikitext.

We use a GPT-2 architecture with 1.4B parameters, with the architecture hyperparameters given by a hidden dimension of 1536, 24 heads, 48 layers, and a sequence length of 2048. The training batch size was 256. Based on the number of training tokens, sequence length, and training batch size, we trained this model for 46000 steps so as to consume the tokens in our mixture datasets exactly once. 
The model was optimized using AdamW with a learning rate of 1e-4 and weight decay of 0.1. We trained the model using Levanter on a v3-128 TPU instance on Google Cloud for 1.5 days (\citet{hall2023levanter}).

\subsection{10 Canary Datasets}
In this section we provide additional details on the 10 canary datasets we injected into Wikitext to form our pretraining data.
For BoolQ\footnote{\url{https://github.com/google-research-datasets/boolean-questions}} \citep{clark2019boolq}, HellaSwag\footnote{\url{https://rowanzellers.com/hellaswag/}} \citep{zellers2019hellaswag}, MNLI\footnote{\url{https://cims.nyu.edu/~sbowman/multinli/}} \citep{williams2018broadcoverage_mnli}, Natural Questions\footnote{\url{https://github.com/google-research-datasets/natural-questions}} \citep{kwiatkowski-etal-2019-naturalqa}, TruthfulQA\footnote{\url{https://github.com/sylinrl/TruthfulQA/blob/main/data/finetune_truth.jsonl}} \citep{lin2022truthfulqa}, PIQA\footnote{\url{https://yonatanbisk.com/piqa/}} \citep{bisk2019piqa}, 
we sample a random subset of 1000 examples. For OpenbookQA\footnote{\url{https://allenai.org/data/open-book-qa}} \citep{mihaylov2018suit_openbookqa}, because of its smaller test set of size n=500, we used all 500 examples.
Finally, for MMLU\footnote{\url{https://github.com/hendrycks/test}} \citep{hendryckstest2021}, we chose from subsets with no multi-line examples and having at least 500 examples, specifically Professional Psychology (n=611), MMLU Professional Law (n=1000), MMLU High School Psychology (n=544). Finally, we shuffle the examples in all datasets to make them exchangeable. 
In Table ~\ref{canary-appendix}, we provide additional information about the injected datasets including number of examples, average words per example, and number of tokens per dataset. For each duplication rate, we included a short, medium and longer dataset, as measured by the total token count. The total token count of injected benchmarks is 19.235M tokens, meaning that the injected dataset is less than 0.1\% of the entire pre-training dataset.

\begin{table}[ht!]
    \caption{ Injected canary datasets and duplication counts used in our pretraining experiments. }
    \label{canary-appendix}
    \begin{center}
      \begin{tabular}{l|llllll}
        \toprule
    Name                         & Examples & Avg Words/Ex & Tokens  & Dup Rate (High) & Dup Rate (Low) \\ \midrule
    BoolQ                        & 1000 & 110         & 110k          &    1 & 1     \\
    HellaSwag                    & 1000 & 185         & 185k          &      1 & 1     \\
    OpenbookQA                   & 500  & 40         & 20k       &   1 & 2    \\
    Natural Questions            & 1000 & 32        & 32k  &    10 & 2     \\
    MNLI                         & 1000 & 235        & 235k &   10 & 4    \\
    TruthfulQA                   & 1000 & 25        & 25k  &  10 & 4   \\
    PIQA                         & 1000 & 50        & 50k &  50 & 7     \\
    MMLU Pro. Law                & 1000 & 2000        & 200k &     50 & 7     \\
    MMLU Pro. Psych         & 611  & 50        & 30k  &  50 & --   \\
    MMLU H.S. Psych        & 544  & 37      & 20k & 100 & -- \\                                                                   
    \bottomrule
    
    \end{tabular}
    \end{center}
    \end{table}

\subsection{Full MMLU Results}

We list in table \ref{all-mmlu} the full set of MMLU results used to generate the omnibus p-values for contamination by MMLU listed in table \ref{llama-mmlu}, before filtering out for suspected non-exchangeability.

\begin{table}[h!]
  \caption{Full MMLU results for LLaMA2-7B, Mistral-7B, Pythia-1.4B, GPT-2XL and BioMedLM.}
  \label{all-mmlu}
  \begin{center}
    \begin{tabular}{l|llllll}
      \toprule
      Dataset                           & Size  & LLaMA2-7B & Mistral-7B & Pythia-1.4B & GPT-2 XL & BioMedLM \\ \midrule
      Abstract-Algebra &100 &0.103 &0.246 &0.645 &0.894 &0.861 \\
      Anatomy &135 &0.586 &0.605 &0.605 &0.439 &0.649 \\
      Astronomy &152 &0.550 &0.657 &0.050 &0.003 &0.050 \\
      Business-Ethics &100 &0.936 &0.440 &0.107 &0.808 &0.499 \\
      Clinical-Knowledge &265 &0.199 &0.108 &0.081 &0.004 &0.268 \\
      College-Biology &144 &0.108 &0.051 &0.152 &0.260 &0.779 \\
      College-Chemistry &100 &0.526 &0.614 &0.595 &0.355 &0.265 \\
      College-Computer-Science &100 &0.060 &0.498 &0.383 &0.532 &0.267 \\
      College-Mathematics &100 &0.059 &0.151 &0.162 &0.321 &0.727 \\
      College-Medicine &173 &0.397 &0.106 &0.340 &0.440 &0.067 \\
      College-Physics &102 &0.694 &0.757 &0.972 &0.719 &0.262 \\
      Computer-Security &100 &0.214 &0.007 &0.314 &0.180 &0.928 \\
      Conceptual-Physics &235 &0.554 &0.333 &0.811 &0.710 &0.924 \\
      Econometrics &114 &0.616 &0.761 &0.540 &0.508 &0.035 \\
      Electrical-Engineering &145 &0.266 &0.364 &0.595 &0.490 &0.277 \\
      Elementary-Mathematics &378 &0.059 &0.416 &0.260 &0.355 &0.528 \\
      Formal-Logic &126 &0.666 &0.750 &0.990 &0.930 &0.398 \\
      Global-Facts &100 &0.779 &0.957 &0.448 &0.339 &0.667 \\
      High-School-Biology &310 &0.645 &0.279 &0.476 &0.499 &0.416 \\
      High-School-Chemistry &203 &0.229 &0.426 &0.813 &0.539 &0.055 \\
      High-School-Computer-Science &100 &0.023 &0.085 &0.106 &0.045 &0.484 \\
      High-School-European-History &165 &0.009 &1e-38 &1e-38 &1e-38 &1e-38 \\
      High-School-Geography &198 &0.194 &0.339 &0.341 &0.721 &0.277 \\
      High-School-Government-And-Politics &193 &0.294 &0.066 &0.025 &0.372 &0.003 \\
      High-School-Macroeconomics &390 &0.543 &0.222 &0.276 &0.236 &0.511 \\
      High-School-Mathematics &270 &0.473 &0.182 &0.001 &0.272 &0.032 \\
      High-School-Microeconomics &238 &0.862 &0.797 &0.712 &0.122 &0.181 \\
      High-School-Physics &151 &0.339 &0.757 &0.171 &0.114 &0.354 \\
      High-School-Psychology &545 &0.033 &0.037 &0.004 &0.039 &0.007 \\
      High-School-Statistics &216 &0.077 &0.119 &0.228 &0.233 &0.211 \\
      High-School-Us-History &204 &1e-38 &1e-38 &1e-38 &1e-38 &0.001 \\
      High-School-World-History &237 &1e-38 &1e-38 &1e-38 &1e-38 &1e-38 \\
      Human-Aging &223 &0.882 &0.830 &0.617 &0.677 &0.803 \\
      Human-Sexuality &131 &0.807 &0.959 &0.502 &0.789 &0.471 \\
      International-Law &121 &0.061 &0.545 &0.532 &0.582 &0.817 \\
      Jurisprudence &108 &0.009 &0.075 &0.852 &0.879 &0.616 \\
      Logical-Fallacies &163 &0.103 &0.188 &0.082 &0.397 &0.193 \\
      Machine-Learning &112 &0.266 &0.297 &0.373 &0.061 &0.868 \\
      Management &103 &0.516 &0.209 &0.454 &0.907 &0.281 \\
      Marketing &234 &0.874 &0.684 &0.977 &0.848 &0.451 \\
      Medical-Genetics &100 &0.501 &0.037 &0.523 &0.425 &0.729 \\
      Miscellaneous &783 &0.099 &0.122 &0.086 &0.266 &0.081 \\
      Moral-Disputes &346 &0.017 &0.011 &0.097 &0.190 &0.157 \\
      Moral-Scenarios &895 &0.652 &0.022 &0.121 &0.487 &0.413 \\
      Nutrition &306 &0.163 &0.601 &0.413 &0.865 &0.609 \\
      Philosophy &311 &0.011 &0.094 &0.003 &0.049 &0.044 \\
      Prehistory &324 &0.203 &0.412 &0.055 &0.268 &0.262 \\
      Professional-Accounting &282 &0.708 &0.242 &0.197 &0.910 &0.798 \\
      Professional-Medicine &272 &0.001 &0.017 &0.001 &0.002 &1e-38 \\
      Professional-Psychology &612 &0.001 &0.001 &0.058 &0.009 &0.201 \\
      Public-Relations &110 &0.402 &0.237 &0.162 &0.512 &0.622 \\
      Security-Studies &245 &0.070 &0.043 &0.687 &0.061 &0.073 \\
      Sociology &201 &0.350 &0.258 &0.260 &0.660 &0.496 \\
      Us-Foreign-Policy &100 &0.934 &0.252 &0.778 &0.646 &0.173 \\
      Virology &166 &0.203 &0.854 &0.219 &0.796 &0.212 \\
      World-Religions &171 &0.311 &0.882 &0.933 &0.222 &0.851 \\
      \bottomrule
  \end{tabular} 
  \end{center}
  \end{table}

  \begin{figure*}[t!]
    \includegraphics[width=\linewidth]{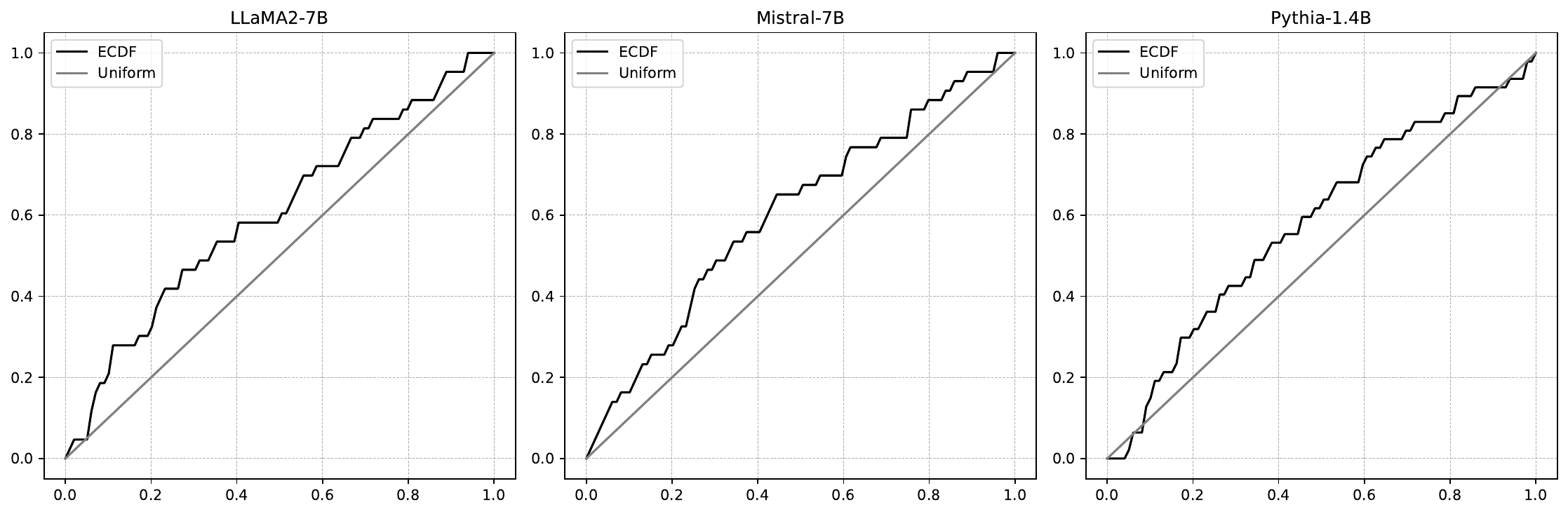}
    \caption{Empirical CDFs of MMLU p-values of LLaMA2, Mistral, and Pythia after exclusion of BioMedLM and GPT-2 significant test files, plotted against CDFs of a Uniform(0,1).}
    \label{fig:ecdf}
  \end{figure*}

\end{document}